\documentclass[twoside]{article}

\usepackage[preprint]{aistats2027}

\usepackage[T1]{fontenc}
\usepackage[utf8]{inputenc}

\usepackage{graphicx}

\usepackage{mathtools}
\usepackage{amssymb}
\usepackage{amsfonts}
\usepackage{amsthm}
\usepackage{centernot}
\usepackage{enumitem}

\usepackage{algorithm}
\usepackage{algpseudocode}
\usepackage{xurl}
\usepackage[hidelinks]{hyperref}
\hypersetup{
  pdftitle={Removing Information Content Does Not Certify Tamper Resistance in Open-Weight Models},
  pdfauthor={Domenic Rosati, Alessa Carbo, Ali Dadsetan, Hong Huang, Matthew Young, Subhabrata Majumdar, Frank Rudzicz, Hassan Sajjad}
}
\usepackage{cleveref}

\usepackage{booktabs}

\usepackage[round]{natbib}

\theoremstyle{plain}
\newtheorem{theorem}{Theorem}[section]
\newtheorem{proposition}[theorem]{Proposition}
\newtheorem{lemma}[theorem]{Lemma}

\theoremstyle{definition}
\newtheorem{definition}[theorem]{Definition}

\theoremstyle{remark}

\newtheorem{corollary}[theorem]{Corollary}

\begin{document}

\runningtitle{Removing Information Content Does Not Certify Tamper Resistance}
\runningauthor{Rosati et al.}

\twocolumn[

\aistatstitle{Removing Information Content Does Not Certify \\ Tamper Resistance in Open-Weight Models}

\aistatsauthor{
  Domenic Rosati\textsuperscript{1,4,*} \quad
  Alessa Carbo\textsuperscript{2} \quad
  Ali Dadsetan\textsuperscript{1,4} \quad
  Hong Huang\textsuperscript{1} \\[0.5ex]
  \bfseries Matthew Young\textsuperscript{1} \quad
  Subhabrata Majumdar\textsuperscript{3} \quad
  Frank Rudzicz\textsuperscript{1,4} \quad
  Hassan Sajjad\textsuperscript{1}
}

\aistatsaddress{
  \textsuperscript{1}\,Dalhousie University \\
  \textsuperscript{2}\,Johns Hopkins University \\
  \textsuperscript{3}\,Indian Institute of Management Bangalore \\
  \textsuperscript{4}\,Vector Institute \\[0.5ex]
  \textsuperscript{*}\,Corresponding author: \href{mailto:domenic.rosati@dal.ca}{\nolinkurl{domenic.rosati@dal.ca}}
} ]

\begin{abstract}
Does removing harmful information make open-weight models resistant to fine-tuning attacks? We show that mutual information at release alone cannot universally certify slow recovery. Function-preserving reparameterizations leave information unchanged while altering gradient-descent geometry, so an invariant certificate is bounded by the fastest reachable parameterization. We apply this principle to weight--data mutual information under training-data filtering and label--representation mutual information under capability removal. Training order can change recovery time at fixed weight--data information, while exact representation-level independence can preserve the entire parameter Jacobian. An explicit construction has both information quantities equal to zero and recovers in one gradient step. Controlled experiments illustrate order-dependent recovery and parameterization-dependent attack speed. These results identify the missing requirement for certification: constraints on attack dynamics beyond mutual information at release.
\end{abstract}

\section{Introduction}

Open-weight foundation models can be misused to produce deepfakes
\citep{maiberg2024deepfaketools} or assist with biological threats
\citep{karatzikos2026safeguarding}. Safeguards trained into these models can
be removed by fine-tuning \citep{qi2023fine}.
\emph{Tamper resistance} seeks to prevent safeguard removal when attackers
have access to model weights \citep{casper2026open}, but existing approaches
have shown limited robustness to adaptive fine-tuning attacks
\citep{qi2025evaluating,zloczower2026one}.

One proposed explanation is that safety interventions obscure harmful
capabilities without removing their underlying information
\citep{deeb2024unlearning,lucki2024adversarial,siddiqui2026dormant,hu2025unlearning}.
This motivates both filtering harmful data out of pretraining
\citep{obrien2025deepignorance,maini2026safety,korbak2023pretraining} and
removing acquired capabilities through unlearning
\citep{li2024wmdp,barez2025open}. The certification question is whether
successful information removal guarantees that an attacker will need many
updates to recover the target behavior.

We study \emph{information content} as statistical dependence, quantified
by mutual information (MI). This choice separates what a released model
reveals from how it changes under optimization. MI can be identical for
models with different parameter derivatives; those derivatives determine
how an attack changes the model's predictions. Our question is therefore
whether MI at release alone can lower-bound recovery time.

The argument proceeds in three steps. First, we define recovery time for a
specified attack and show that function-preserving reparameterizations
induce different gradient-descent preconditioners while keeping MI fixed
(\S\ref{sec:MI-orbit-geometry}). Any invariant certificate must respect the
fastest attack within the corresponding orbit. Second, we apply this
limitation to training-data filtering
(\S\ref{sec:training-data-filtering}). Excluding an independently
drawn harmful dataset gives zero weight--data MI, but does not constrain
transferable knowledge or optimization geometry. Even without filtering,
training order can attenuate a retained sample's behavioral effect while
preserving all its information in the weights. Third, we ask whether the
stronger condition of label--representation independence (the goal of unlearning) closes this gap
(\S\ref{sec:formalizing-capability-removal}). An explicit construction
achieves exact independence while preserving the entire parameter
Jacobian, and a special case recovers in one gradient step.

We test these predictions on the Deep Ignorance 6.9B strong-filter (SF) and unfiltered (UF) checkpoints.
We show that the tamper resistance results of \citet{obrien2025deepignorance} are simply artifacts of the parameterization chosen to represent the model.
A query--key (Q/K) gauge rescaling makes SF nearly match UF's training accuracy within 293 SGD updates (69.3\% for SF at $\alpha=8$ versus 68.9\% for UF at $\alpha=1$). Rescaling UF to $\alpha=1/24$ slows its median first attainment of 60\% training accuracy from 125 to 225 updates, compared with 150 for unscaled SF and 125 for rescaled SF.
The apparent tamper resistance of SF can thus be cheaply reproduced in UF by changing its parameterization
(\Cref{fig:qk-orbit-accuracy}). This indicates that safety pretraining may have little to no tamper-resistance effect once parameterization is taken into account.

\section{Formalizing Tamper Resistance}
\label{sec:tamper-resistance-def}

Tamper resistance concerns unauthorized modifications of a released model
\citep{che2025model}. We study attackers who fine-tune its weights on a
target dataset. Resistance to this attack family is necessary for a more
general tamper-resistance guarantee, and its update dynamics let us ask
precisely what information removal can certify.

Let $f_\theta$ be a model with parameters $\theta\in\mathbb R^d$ and let
$\mathcal D_f=\{(x_i,y_i)\}_{i=1}^N$ be the attack dataset. For a nonnegative
pointwise loss $\ell$, define
\[
\mathcal L_{\mathcal D_f}(\theta)
=\frac1N\sum_{i=1}^N\ell(f_\theta(x_i),y_i).
\]
An attack protocol $\mathcal A$ specifies the trainable parameters,
optimizer, step-size rule, and any stochastic sampling. Starting from the
released parameters $\theta$, its loss hitting time is
\[
\tau^\epsilon(\theta;\mathcal A)
:=\inf\{t\in\mathbb N_0:
\mathbb E_{\mathcal A}[\mathcal L_{\mathcal D_f}(\theta_t)]\leq\epsilon\},
\]
with $\inf\varnothing=\infty$. The expectation is over attack randomness,
conditional on the release and attack dataset. Unless stated otherwise,
we require initial $\mathcal L_{\mathcal D_f}(\theta)>\epsilon$, so success requires
at least one update.

\begin{definition}[Attack-relative and robust certificates]
\label{def:certificate}
Fix a class of releases and attack datasets, a threshold $\epsilon$, and
an attack family $\mathfrak A$. A certificate based on a release statistic
$\gamma(\theta)$ is a bound
\[
B(\gamma(\theta))\leq
\tau^\epsilon_{\mathfrak A}(\theta)
:=\inf_{\mathcal A\in\mathfrak A}\tau^\epsilon(\theta;\mathcal A)
\]
that holds throughout the specified class. For a singleton
$\mathfrak A=\{\mathcal A\}$ it is attack-relative; for a larger family it
must hold for every included attack. A bound exceeding one update is
nontrivial under the initial-loss condition above.
\end{definition}

The attack family may include momentum, adaptive step sizes, or
function-preserving preprocessing. We count optimization updates after
preprocessing; any separate preprocessing cost must be specified for a
compute-cost certificate. Retained utility can be imposed by restricting
the releases and successful attacks to a fixed utility-loss tolerance.
Our one-step construction below can preserve a separate utility predictor
exactly. The empirical studies specify their attack protocols and report loss or
accuracy hitting criteria.

For a random release $\Theta$, weight--data MI is a property of the data
sampling and training ``experiment,'' not of a single realized weight vector.
A certificate based on it must specify whether it holds for every release,
in expectation over releases, or with a stated probability. The
zero-information construction below has the same one-step hitting time
for every draw, so it applies to each of these conventions.

Prior work studies function-preserving curvature interventions and their
convergence guarantees under specified attack assumptions
\citep{rosati2026distribution,rosati2026limits}. Here we examine the
complementary question: can information content alone supply a lower
bound, without constraints linking it to the attack dynamics?

\section{Information vs.\ Optimization Geometry}
\label{sec:MI-orbit-geometry}

A certificate based only on MI assigns the same bound to every release
with the same information value. We first identify transformations that
preserve that value while changing how gradient descent acts. This
provides the common mechanism for the two information-removal settings
that follow.

Let $(S,X)$ be jointly distributed and let $Y_\theta=f_\theta(X)$.
We call a parameter symmetry group \emph{representation-compatible} when
it acts on the representation through an invertible output transformation,
as specified in Lemma~\ref{lem:decoupling}. A map is \emph{bimeasurable}
if it is bijective and both it and its inverse are measurable. The following lemma states the group conditions and their consequences
for information and derivatives.

\begin{lemma}[Mutual information--Jacobian decoupling]
\label{lem:decoupling}
Let a group $G$ act smoothly on the parameter and output spaces through
$T_g:\Theta\to\Theta$ and $\rho_g:\mathcal Y\to\mathcal Y$. Suppose
\[
f_{T_g(\theta)}(x)=\rho_g(f_\theta(x))
\qquad\text{for all }g,\theta,x.
\]
Then $I(S;Y_{T_g(\theta)})=I(S;Y_\theta)$. If $f$ is differentiable in
$\theta$, its parameter Jacobian satisfies
\[
D_\theta f_\theta(x)
=[D\rho_g(f_\theta(x))]^{-1}
D_{T_g(\theta)}f_{T_g(\theta)}(x)D_\theta T_g(\theta).
\]
Thus the Jacobian need not be invariant on the parameter orbit
$\mathcal O_G(\theta):=\{T_g(\theta):g\in G\}$.
\end{lemma}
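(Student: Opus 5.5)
The lemma has two parts: invariance of mutual information and a chain-rule identity for the Jacobian. I will prove them separately, starting from the equivariance identity $f_{T_g(\theta)}(x)=\rho_g(f_\theta(x))$.

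For the information claim, equivariance gives pointwise $Y_{T_g(\theta)}=\rho_g(Y_\theta)$ as random variables. Since $\rho_g$ is a group action, it has inverse $\rho_{g^{-1}}$, and both maps are smooth, hence continuous and measurable. So $\rho_g$ is bimeasurable on $\mathcal Y$. I then invoke the standard invariance of mutual information under bimeasurable maps of one argument. By data processing, $I(S;\rho_g(Y_\theta))\le I(S;Y_\theta)$. Applying data processing again with $\rho_{g^{-1}}$ gives the reverse inequality. If one prefers to avoid the data processing inequality, one can instead note that the pushforward of the joint law of $(S,Y_\theta)$ under $\mathrm{id}\times\rho_g$ and the product of its marginals correspond bijectively. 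The Radon--Nikodym derivative is then transported, so the KL divergence defining MI is unchanged. I expect this to be routine. The only care needed is to state that MI is taken over the randomness of $(S,X)$ with $\theta$ fixed.

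For the Jacobian identity, I differentiate the equivariance identity in $\theta$ at fixed $x$ and $g$. The left side is the composition $\theta\mapsto T_g(\theta)\mapsto f_{T_g(\theta)}(x)$. By the chain rule its derivative is $D_{T_g(\theta)}f_{T_g(\theta)}(x)\,D_\theta T_g(\theta)$. Differentiability of this composition needs $f$ differentiable in its parameter, which is assumed, and $T_g$ smooth. The right side has derivative $D\rho_g(f_\theta(x))\,D_\theta f_\theta(x)$. Equating the two and multiplying on the left by $[D\rho_g(f_\theta(x))]^{-1}$ yields the displayed formula. That inverse exists because $\rho_g$ is a diffeomorphism with smooth inverse $\rho_{g^{-1}}$. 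Differentiating $\rho_{g^{-1}}\circ\rho_g=\mathrm{id}$ gives $D\rho_{g^{-1}}(\rho_g(y))\,D\rho_g(y)=I$.

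The main point to handle carefully is this invertibility, together with the interpretation of ``act smoothly.'' I will read it as saying that each $T_g$ and each $\rho_g$ is $C^1$, and that the action axioms $\rho_{gh}=\rho_g\circ\rho_h$ and $\rho_e=\mathrm{id}$ hold. The same argument shows that $D_\theta T_g$ is invertible.

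For the final remark, I will show that the Jacobian is generally not invariant on the orbit. Rearranged, the formula says that the Jacobian at $T_g(\theta)$ equals $D\rho_g\, D_\theta f_\theta\,[D_\theta T_g]^{-1}$. This expression generally differs from $D_\theta f_\theta$. A one-line example suffices: take a scaling symmetry with $\rho_g=\mathrm{id}$, as in the Q/K rescaling discussed in the introduction, where $T_g$ scales $Q$ by $\alpha$ and $K$ by $\alpha^{-1}$. Then the Jacobian blocks are rescaled by $\alpha^{-1}$ and $\alpha$ respectively, while the function, and hence the MI, is unchanged.
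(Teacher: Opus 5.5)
Your proposal is correct and follows the paper's own proof. It shows $\rho_g$ is bimeasurable via the inverse $\rho_{g^{-1}}$, applies data processing in both directions for MI equality, and differentiates the equivariance identity by the chain rule; your explicit invertibility argument for $D\rho_g$ and the Q/K rescaling example (where $\alpha\neq 1$ and a nonzero Q- or K-block Jacobian make the non-invariance concrete) only fill in details the paper leaves implicit.
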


\begin{proof}
Each smooth group action has a smooth inverse, so $\rho_g$ is
bimeasurable. Applying data processing to $\rho_g$ and its inverse proves
MI equality. Differentiating the equivariance identity gives the Jacobian
formula. Full proof details are given in the appendix.
\end{proof}

For an exactly function-preserving symmetry, $\rho_g$ is the identity.
Such symmetries include positive rescaling in homogeneous networks
\citep{dinh2017sharp} and query--key rescaling in Transformer attention
\citep{zhao2025symmetry}. Their effect on gradient descent is explicit.

\begin{proposition}[Orbit-induced preconditioning]
\label{prop:orbit-preconditioning}
Let $T_g(\theta)=A_g\theta$ be invertible and linear, with
$\mathcal L(A_g\theta)=\mathcal L(\theta)$ for every $\theta$.
Gradient descent initialized at $\theta'_0=A_g\theta_0$,
\[
\theta'_{t+1}=\theta'_t-\eta\nabla\mathcal L(\theta'_t),
\]
has pulled-back iterates $\theta_t=A_g^{-1}\theta'_t$ satisfying
\[
\theta_{t+1}=\theta_t-\eta P_g\nabla\mathcal L(\theta_t),
\qquad P_g=A_g^{-1}A_g^{-\top}.
\]
\end{proposition}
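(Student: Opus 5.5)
The plan is to differentiate the invariance identity to relate gradients at $\theta$ and at $A_g\theta$, and then pull the update back through $A_g^{-1}$. This is the linear special case of the Jacobian formula in Lemma~\ref{lem:decoupling}, applied to the scalar loss with $\rho_g$ the identity.

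\textbf{Gradient transformation.} Differentiate $\mathcal L(A_g\theta)=\mathcal L(\theta)$ in $\theta$ by the chain rule. This gives $A_g^\top\nabla\mathcal L(A_g\theta)=\nabla\mathcal L(\theta)$ for every $\theta$, and hence
\[
\nabla\mathcal L(A_g\theta)=A_g^{-\top}\nabla\mathcal L(\theta).
\]
This step only needs $\mathcal L$ differentiable, which is implicit in running gradient descent. Since the identity holds for all $\theta$, it can be evaluated at any iterate.

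\textbf{Induction on $t$.} Set $\theta_t:=A_g^{-1}\theta'_t$, so that $\theta'_t=A_g\theta_t$ and $\theta_0$ matches the stated initialization. Substitute into the update $\theta'_{t+1}=\theta'_t-\eta\nabla\mathcal L(\theta'_t)$ and use the gradient transformation at $\theta=\theta_t$:
\[
A_g\theta_{t+1}=A_g\theta_t-\eta A_g^{-\top}\nabla\mathcal L(\theta_t).
\]
Multiplying on the left by $A_g^{-1}$ yields $\theta_{t+1}=\theta_t-\eta A_g^{-1}A_g^{-\top}\nabla\mathcal L(\theta_t)$. This is the claimed recursion with $P_g=A_g^{-1}A_g^{-\top}$, and induction gives it for every $t$. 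It is worth remarking that $P_g$ is symmetric positive definite because $A_g$ is invertible, so it is a genuine preconditioner. Also, $\mathcal L(\theta_t)=\mathcal L(\theta'_t)$, so hitting times of the primed run equal those of the preconditioned run from $\theta_0$.

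\textbf{Main obstacle.} The argument is routine; the only care needed is to get the transpose placement right in the chain rule. The derivative of $\theta\mapsto\mathcal L(A_g\theta)$ is $\nabla\mathcal L(A_g\theta)^\top A_g$, so the gradient picks up $A_g^\top$. Solving for the gradient at the transformed point therefore gives $A_g^{-\top}$, not $A_g^{-1}$.
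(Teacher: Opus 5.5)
Your proposal is correct and follows the paper's own argument: differentiate the invariance $\mathcal L(A_g\theta)=\mathcal L(\theta)$ to get $\nabla\mathcal L(A_g\theta)=A_g^{-\top}\nabla\mathcal L(\theta)$, substitute into the update, and multiply by $A_g^{-1}$. Your added remarks that $P_g$ is symmetric positive definite and that $\mathcal L(\theta_t)=\mathcal L(\theta'_t)$ are also correct.
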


\begin{proof}
Differentiating loss invariance gives
$\nabla\mathcal L(A_g\theta)=A_g^{-\top}\nabla\mathcal L(\theta)$.
Substitute into the update and multiply by $A_g^{-1}$.
\end{proof}

For example, $A_\alpha=\operatorname{diag}(\alpha I,\alpha^{-1}I)$ induces
$P_\alpha=\operatorname{diag}(\alpha^{-2}I,\alpha^2I)$.
The initial function is unchanged, but the update metric varies with
$\alpha$ \citep{dinh2017sharp,neyshabur2015path,salimans2016weight}.
This identity holds throughout the trajectory for plain gradient descent;
other optimizers and restricted trainable blocks require their own analysis.
In particular, if the symmetry acts only on frozen parameters and leaves
the loss as a function of the trainable block identical, it leaves that
block's exact gradient unchanged.

\begin{corollary}[Orbit-wise certificate ceiling]
\label{cor:orbit-certificate}
Fix an attack $\mathcal A$ and a function-preserving symmetry group $G$.
If $\gamma(T_g(\theta))=\gamma(\theta)$ and
$B(\gamma(\theta))$ lower-bounds the attack hitting time for every
representative in $\mathcal O_G(\theta)$, then
\[
B(\gamma(\theta))\leq
\inf_{g\in G}\tau^\epsilon(T_g(\theta);\mathcal A).
\]
\end{corollary}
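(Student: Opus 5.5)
The corollary is essentially immediate from the hypotheses, so the plan is a short chain of inequalities followed by taking an infimum. Fix $\theta$ and an arbitrary $g\in G$, and set $\theta'=T_g(\theta)\in\mathcal O_G(\theta)$. By hypothesis, $B(\gamma(\cdot))$ lower-bounds the hitting time of $\mathcal A$ at every orbit representative, so in particular
\[
B(\gamma(\theta'))\leq\tau^\epsilon(\theta';\mathcal A).
\]
The invariance assumption $\gamma(T_g(\theta))=\gamma(\theta)$ gives $B(\gamma(\theta'))=B(\gamma(\theta))$, since $B$ is a function of the statistic alone. Hence $B(\gamma(\theta))\leq\tau^\epsilon(T_g(\theta);\mathcal A)$ for every $g\in G$.

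Since the left-hand side does not depend on $g$, it is a lower bound for the whole set $\{\tau^\epsilon(T_g(\theta);\mathcal A):g\in G\}\subseteq\mathbb N_0\cup\{\infty\}$. It is therefore at most the infimum of that set, which is the claimed inequality. No measurability or attainment issues arise: the infimum is taken over a nonempty set of extended naturals (it contains $g=e$), so it exists and is even attained.

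The only point needing care is making sure the orbit representatives belong to the class on which the certificate is asserted. I would remark that this is what "function-preserving" buys. Because $f_{T_g(\theta)}=f_\theta$, the initial loss $\mathcal L_{\mathcal D_f}(T_g(\theta))=\mathcal L_{\mathcal D_f}(\theta)>\epsilon$ is unchanged, so every representative satisfies the nontriviality condition of Definition~\ref{def:certificate}. Beyond that, the hypothesis simply stipulates validity on the orbit.

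To connect the corollary to the paper's use of it, I would note that when $\gamma$ is an MI quantity, invariance follows from Lemma~\ref{lem:decoupling} with $\rho_g=\mathrm{id}$. Proposition~\ref{prop:orbit-preconditioning} then shows that the right-hand side ranges over gradient descent with preconditioners $P_g$. These are context remarks rather than steps of the proof, and I do not expect any step to be a real obstacle.
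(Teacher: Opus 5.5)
Your proposal is correct and follows the paper's own proof, which simply notes that the bound must hold at every orbit representative, uses invariance of $\gamma$, and takes the infimum over $g\in G$. Your extra remarks (the infimum over $\mathbb N_0\cup\{\infty\}$ is attained, and function preservation keeps the initial loss above $\epsilon$) are accurate but not needed for the argument.
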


\begin{proof}
The same bound must hold for every $g$; take the infimum.
\end{proof}

The bound must hold even at the fastest representative. It can still be useful if every
representative is slow. If the attack family includes preprocessing
by a reachable subset $G_{\mathrm{acc}}\subseteq G$, its robust hitting
time is no larger than the corresponding infimum over $G_{\mathrm{acc}}$.
The realizable preconditioners are those induced by the model's actual
symmetries, rather than arbitrary positive-definite matrices.
\Cref{fig:neuron_orbit_mi_trace,fig:pythia-constant-on-orbit} illustrate
orbit-dependent recovery under fixed attacks
(Supp~\ref{app:additional-empirical-validation}).

The remaining question is whether removing information forces all such
representatives to be slow. We first examine data filtering, where exclusion
has a precise meaning in a sampling experiment.

\section{Training-Data Filtering and Training Order}
\label{sec:training-data-filtering}

Following information-theoretic learning analyses
\citep{xu2017information,pensia2018generalization,bu2020tightening,negrea2019information},
we model training as a channel from sampled data to learned parameters.
Draw safe and harmful candidate datasets independently,
\[
\mathcal D_S\sim P_S^{n_S},\qquad
\mathcal D_H\sim P_H^{n_H},\qquad
\mathcal D_S\perp\mathcal D_H.
\]
A filter $F_\rho$, with $\rho\in[0,1]$, removes a $\rho$-fraction of the
harmful samples, giving
$\mathcal D_{\mathrm{pre}}^\rho=\mathcal D_S\cup F_\rho(\mathcal D_H)$.
The endpoints are $F_0(\mathcal D_H)=\mathcal D_H$ and
$F_1(\mathcal D_H)=\varnothing$. The filter's random seed, any training
randomness, and the initialization are independent of $\mathcal D_H$;
the initialization may in particular be fixed foundation-model weights.
A fixed learning procedure, including its rule for ordering the retained
samples, produces
\[
\Theta^\rho=\mathcal A_{\Theta_0}(\mathcal D_{\mathrm{pre}}^\rho),
\qquad
\mathcal D_H\longrightarrow\mathcal D_{\mathrm{pre}}^\rho
\longrightarrow\Theta^\rho.
\]
Here $I(\mathcal D_H;\Theta^\rho)$ measures dependence on the particular
random candidate dataset. It does not measure all knowledge about the
harmful task: safe data and fixed pretrained features can support that
task while being independent of a fresh draw from $P_H$.

\begin{corollary}[Complete filtering leaves geometry unconstrained]
\label{cor:data-filtering}
At complete filtering, $I(\mathcal D_H;\Theta^1)=0$.
For every fixed bimeasurable parameter transformation $T_g$,
\[
I(\mathcal D_H;T_g(\Theta^1))=0.
\]
For a representation-compatible symmetry as in
Lemma~\ref{lem:decoupling}, this equality does not require the parameter
Jacobian to be invariant.
\end{corollary}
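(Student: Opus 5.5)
The plan has three parts: zero information at complete filtering, preservation under any fixed bimeasurable map, and the Jacobian remark.

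\emph{Zero information.} At $\rho=1$ the filter returns $F_1(\mathcal D_H)=\varnothing$, so $\mathcal D_{\mathrm{pre}}^1=\mathcal D_S$. The release is then
\[
\Theta^1=\mathcal A_{\Theta_0}(\mathcal D_S),
\]
a measurable function of $(\mathcal D_S,\Theta_0,\xi)$, where $\xi$ collects the filter seed and the training randomness. By the sampling assumptions, $\mathcal D_H$ is independent of $\mathcal D_S$, $\xi$ and $\Theta_0$. I will check that the stated independences combine into joint independence of $\mathcal D_H$ from the tuple $(\mathcal D_S,\Theta_0,\xi)$; I take this as the intended modeling assumption and state it explicitly. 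Joint independence gives $\mathcal D_H\perp\Theta^1$, so the joint law factorizes and $I(\mathcal D_H;\Theta^1)=0$. Alternatively, data processing on the tuple gives
\[
I(\mathcal D_H;\Theta^1)\le I(\mathcal D_H;\mathcal D_S,\Theta_0,\xi)=0.
\]

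\emph{Invariance under $T_g$.} For a fixed bimeasurable $T_g$, data processing gives
\[
I(\mathcal D_H;T_g(\Theta^1))\le I(\mathcal D_H;\Theta^1)=0.
\]
Nonnegativity of MI then forces equality. Bimeasurability is not strictly needed for the zero case, since any measurable image of an independent variable stays independent. Still, I will mirror the two-sided data-processing argument of Lemma~\ref{lem:decoupling}, which gives equality of MI in general and so covers this case as well.

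\emph{Jacobian remark.} Here I invoke Lemma~\ref{lem:decoupling} directly. For a representation-compatible symmetry, $T_g$ is smooth with a smooth inverse, hence bimeasurable, so the previous step applies. The lemma's Jacobian formula relates $D_\theta f$ at $\Theta^1$ to its value at $T_g(\Theta^1)$ through the factors $[D\rho_g]^{-1}$ and $D_\theta T_g$, and these factors need not cancel. To show non-invariance is actually attained, I will exhibit the example $A_\alpha=\operatorname{diag}(\alpha I,\alpha^{-1}I)$ from Proposition~\ref{prop:orbit-preconditioning} with $\rho_g=\mathrm{id}$: the Jacobian transforms by $A_\alpha^{-\top}$, which differs from the identity for $\alpha\neq1$. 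Meanwhile the MI stays at zero by the previous step, so zero information does not force an invariant Jacobian.

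\emph{Main obstacle.} The only delicate point is measure-theoretic: ensuring MI is well defined for these high-dimensional random objects and that the independence hypotheses are joint, not merely pairwise. I will handle this by stating joint independence of $\mathcal D_H$ from the other sources of randomness as the precise form of the assumption.
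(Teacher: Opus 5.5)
Your proposal is correct and follows essentially the paper's route. Like the paper, you apply data processing through the training channel at $\rho=1$ (where $\mathcal D_{\mathrm{pre}}^1=\mathcal D_S\perp\mathcal D_H$), show that the zero MI survives the fixed transformation $T_g$, and use Lemma~\ref{lem:decoupling} for the Jacobian formula. Two small remarks: your one-sided data-processing-plus-nonnegativity step is slightly more general than the paper's appeal to invertibility, and in your explicit $A_\alpha$ example non-invariance also requires $D_\theta f_\theta(x)\neq 0$ at the release, since the transformed Jacobian $D_\theta f_\theta(x)A_\alpha^{-1}$ equals the original for $\alpha\neq1$ only when it vanishes.
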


\begin{proof}
At $\rho=1$, training uses only $\mathcal D_S$, independently of
$\mathcal D_H$, so data processing gives zero MI. An invertible change of
variables preserves MI, while Lemma~\ref{lem:decoupling} gives the
Jacobian transformation.
\end{proof}

\paragraph{Deep Ignorance: recovery rankings depend on coordinates.}
\Cref{fig:qk-orbit-accuracy} examines the actual 6.9B strong-filter (SF)
and unfiltered (UF) checkpoints of \citet{obrien2025deepignorance} on
WMDP-bio, with three attack seeds and a budget of 293 SGD updates.
The multiplicative group $G=(\mathbb R_{>0},\times)$ acts on each
attention head by
$(W_Q,b_Q,W_K,b_K)\mapsto(\alpha W_Q,\alpha b_Q,\alpha^{-1}W_K,\alpha^{-1}b_K)$,
preserving every query--key dot product and therefore each checkpoint's
attention weights and released function in exact arithmetic.
This preprocessing requires a single pass over the Q/K tensors, with
no additional training or data.

\textbf{Rescaled SF reaches essentially the same training accuracy as UF.}
After 293 updates, SF at $\alpha=8$ reaches $69.3\pm4.4\%$
training accuracy, compared with $68.9\pm1.8\%$ for UF at $\alpha=1$;
unscaled SF reaches $58.5\pm5.0\%$.
\textbf{Slowing UF alone reverses its speed advantage over unscaled SF.}
Rescaling UF to $\alpha=1/24$ moves its median first logged attainment
of $60\%$ training accuracy from 125 to 225 updates, whereas SF in its
original coordinates takes 150 updates. Thus UF becomes slower than SF without excluding any
additional data or changing its released function. Its final held-out
accuracy falls from $60.0\pm2.2\%$ to $42.7\pm5.4\%$, also below
unscaled SF's $45.3\pm4.1\%$.
Against rescaled SF, the difference is larger: SF at $\alpha=8$ reaches
the same training threshold in a median of 125 updates and ends at
$69.3\pm4.4\%$ training and $54.0\pm5.0\%$ held-out accuracy,
compared with slowed UF's $61.5\pm7.6\%$ and $42.7\pm5.4\%$.

The apparent filtration advantage in this attack comparison can thus
disappear or reverse under inexpensive preprocessing that preserves
the released function of each checkpoint. This challenges interpreting
the recovery gap on the Deep Ignorance checkpoints as a barrier due
solely to information removed by filtering: the recovery ranking can
reflect the coordinates presented to the attack. This conclusion concerns
these runs and budget; full-model fine-tuning, including Q/K parameters,
allows the symmetry to change the attack's optimization geometry
(Supp~\ref{app:qk-orbit-accuracy-details}).

\begin{figure}[t]
    \centering
    \includegraphics[width=\linewidth]{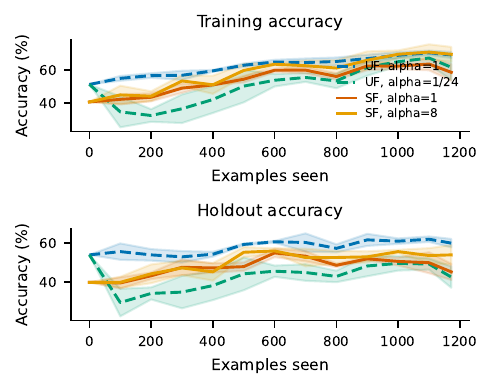}
    \caption{\textbf{Function-preserving rescaling erases and reverses the
    recorded filtration advantage.} After 293 updates, SF at $\alpha=8$
    nearly matches UF at $\alpha=1$ in training accuracy
    ($69.3\%$ versus $68.9\%$); rescaling UF to $\alpha=1/24$ alone
    slows its median first attainment of $60\%$ training accuracy
    from 125 to 225 updates, beyond unscaled SF's 150, and lowers its
    held-out accuracy from $60.0\%$ to $42.7\%$, below unscaled
    SF's $45.3\%$. Means and shaded standard deviations
    over three seeds; each transform preserves its checkpoint's released
    function in exact arithmetic. Protocol details are in
    Supp~\ref{app:qk-orbit-accuracy-details}.}
    \label{fig:qk-orbit-accuracy}
\end{figure}

Complete filtering therefore gives independence from the excluded sample
without fixing the geometry of subsequent learning. There is a
complementary issue at the other endpoint, $\rho=0$: later updates can
suppress a retained sample's behavioral effect. Examining this endpoint
separates deliberate exclusion from attenuation that training induces on
its own.

\subsection{Training order changes recovery at fixed information}
\label{sec:training-order}

Keep the same sampling experiment, with one harmful candidate
$\mathcal D_H=\{(x,Y)\}$, $Y$ uniform on $\{-1,+1\}$, and $T-1$ safe examples
whose labels are deterministically zero. The inputs can all be the same
fixed $x$; the safe sample is independent of $Y$. At $\rho=0$ every
example is retained. We vary only the position $i$ of the harmful sample,
leaving the sampled data, initialization, and update rule fixed.

\begin{proposition}[Ordering decouples sample information from recovery]
\label{prop:ordering-decoupling}
Consider $f_\theta(x)=\theta$ and
$\ell(\theta,y)=\frac12(\theta-y)^2$. Starting from $\theta_0=0$, train
on the sequence above for $T$ steps with constant step size
$\eta\in(0,1)$. If $Y$ appears at position $i\in\{0,\ldots,T-1\}$,
then
\[
\Theta_i=c_iY,\qquad c_i=\eta(1-\eta)^{T-i-1},
\]
and $I(\mathcal D_H;\Theta_i)=H(Y)$ for every $i$, whereas
$\ell(\Theta_i,Y)=\frac12(1-c_i)^2$.
Fine-tuning on the harmful example with step size $\gamma\in(0,1)$ gives
\[
\ell(\theta_s,Y)=(1-\gamma)^{2s}\ell(\Theta_i,Y).
\]
For a common threshold $0<\epsilon<\frac12(1-\eta)^2$, the hitting time is
\[
\tau_i^\epsilon=
\left\lceil
\frac{\log(\ell(\Theta_i,Y)/\epsilon)}{-2\log(1-\gamma)}
\right\rceil.
\]
Consequently, changing the sample's position can change recovery time
while preserving its weight--data MI.
\end{proposition}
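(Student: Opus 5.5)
The plan is to compute the training trajectory in closed form, since the model $f_\theta(x)=\theta$ with squared loss gives a scalar linear recursion. For an example with label $y$, the gradient step reads
\[
\theta_{t+1}=\theta_t-\eta(\theta_t-y)=(1-\eta)\theta_t+\eta y .
\]
The first step is to track the iterate before position $i$. Starting from $\theta_0=0$, every safe step has $y=0$ and only contracts by $(1-\eta)$, so the iterate stays exactly zero up to the harmful step. At step $i$ the iterate becomes $\eta Y$. The remaining $T-i-1$ safe steps each multiply by $(1-\eta)$, which gives $\Theta_i=\eta(1-\eta)^{T-i-1}Y=c_iY$. I will index positions so that the step at position $i$ is followed by exactly $T-i-1$ further steps, consistent with the stated $c_i$.

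The second step is the information claim. Since $\eta\in(0,1)$, we have $c_i>0$, so $y\mapsto c_iy$ is injective on $\{-1,+1\}$. The harmful dataset $\mathcal D_H=\{(x,Y)\}$ with fixed $x$ is a bijective function of $Y$. Hence $I(\mathcal D_H;\Theta_i)=I(Y;c_iY)=H(Y)-H(Y\mid c_iY)=H(Y)$, because $Y$ is determined by $\Theta_i$. This is the same data-processing/bijection argument used in Lemma~\ref{lem:decoupling}, applied in the discrete case. The initial loss then follows directly: $\ell(\Theta_i,Y)=\tfrac12(c_iY-Y)^2=\tfrac12(1-c_i)^2$, using $Y^2=1$.

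The third step handles fine-tuning. The same recursion with step size $\gamma$ and target $Y$ gives $\theta_{s+1}-Y=(1-\gamma)(\theta_s-Y)$, so $\theta_s-Y=(1-\gamma)^s(\Theta_i-Y)$. Squaring yields $\ell(\theta_s,Y)=(1-\gamma)^{2s}\ell(\Theta_i,Y)$. The attack is deterministic, so the expectation in $\tau^\epsilon$ is trivial. The loss is strictly decreasing in $s$, so the hitting time is the least integer $s$ with $(1-\gamma)^{2s}\ell(\Theta_i,Y)\le\epsilon$. Taking logarithms and dividing by $2\log(1-\gamma)<0$, which flips the inequality, gives the stated ceiling formula.

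The main care point is the threshold condition ensuring $\tau_i^\epsilon\ge1$, i.e.\ $\ell(\Theta_i,Y)>\epsilon$. Since $c_i\le\eta$ for all $i$, we get $\tfrac12(1-c_i)^2\ge\tfrac12(1-\eta)^2>\epsilon$, so the log argument exceeds one and the initial-loss requirement of Definition~\ref{def:certificate} holds. For the final consequence, $c_i$ is strictly increasing in $i$, because $(1-\eta)^{T-i-1}$ grows as $i\to T-1$. Therefore $\ell(\Theta_i,Y)$ is strictly decreasing in $i$, and thus $\tau_i^\epsilon$ is nonincreasing. Exhibiting a strict change needs a concrete choice. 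I would take $\gamma$ close to $1$ so that the denominator is large, and choose $\epsilon$ just below $\ell(\Theta_{T-1},Y)=\tfrac12(1-\eta)^2$. Then $\tau_{T-1}^\epsilon=1$, while for $i=0$ and large $T$ we have $\ell(\Theta_0,Y)\approx\tfrac12$, so $\tau_0^\epsilon\ge2$ once $(1-\gamma)^2\cdot\tfrac12>\epsilon$ for a suitable moderate $\gamma$. Meanwhile the MI stays equal to $H(Y)$ for every $i$.
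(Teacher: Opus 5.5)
Your proof is correct and is essentially the paper's argument: the harmful step gives $\eta Y$, the later safe steps contract by $1-\eta$, $c_i>0$ makes $\Theta_i$ determine $Y$, and the fine-tuning residual contracts by $1-\gamma$; your extra checks that $\ell(\Theta_i,Y)>\epsilon$ and that $\ell(\Theta_i,Y)$ decreases in $i$ are useful additions. One slip in your optional strict-change example: taking $\gamma$ close to $1$ makes every $\tau_i^\epsilon$ equal to $1$, so what you actually need is $(1-\gamma)^2\in(\epsilon/\ell(\Theta_0,Y),\,\epsilon/\ell(\Theta_{T-1},Y)]$, which is a nonempty subinterval of $(0,1)$ for any $T\ge2$ (alternatively, letting $\gamma\to0$ drives $\tau_0^\epsilon-\tau_{T-1}^\epsilon\to\infty$).
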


\begin{proof}
The harmful update sets $\theta=\eta Y$; each subsequent safe update
multiplies it by $1-\eta$. Since $c_i>0$, the final parameter determines
$Y$ exactly, giving MI $H(Y)$. The fine-tuning residual is multiplied by
$1-\gamma$ at each step, yielding the loss and hitting-time formulas.
\end{proof}

Later safe updates can make $c_i$ arbitrarily small as their number grows.
The harmful sample then becomes difficult to detect through the released
prediction's magnitude, even though its label remains exactly recoverable
from the sign of the parameter. This is behavioral attenuation induced by
training order, not statistical removal of the sample's information.
At $\rho=1$ the parameter would instead remain zero, with zero MI.
The distinction is exact in this noiseless construction; quantization or
observation noise would define a different information channel.

Here every fine-tuning loss has the same curvature and contraction rate.
The hitting-time difference comes from the initial loss, illustrating
that MI determines neither the initial optimization state nor, by
\S\ref{sec:MI-orbit-geometry}, the update geometry. In deeper models,
composition of updates can affect both. To see the mechanism, an
infinitesimal perturbation $\delta\theta_{i+1}$ propagates to first order as
\[
\delta\theta_T\approx
(I-\eta_{T-1}H_{T-1})\cdots(I-\eta_{i+1}H_{i+1})
\delta\theta_{i+1},
\]
where $H_k$ is the per-step loss Hessian along the reference trajectory
\citep{wang2025capturing}. Later updates can contract or redirect an
earlier sample's effect. This links the scalar construction to training
trajectories without assuming uniform contraction in deep networks.

\Cref{fig:pythia-ordering} examines this dependence on Pythia-160M \citep{biderman2023pythia} with
AG News classification and E2E-NLG generation. The amount of target
training is fixed while its position in a utility-training stream varies.
These experiments test order-dependent recovery; exact equality of MI
across orders is established by the construction, not measured in the
language models (Supp~\ref{app:pythia-ordering-details}).

\begin{figure}[t]
\centering
\includegraphics[width=\linewidth]{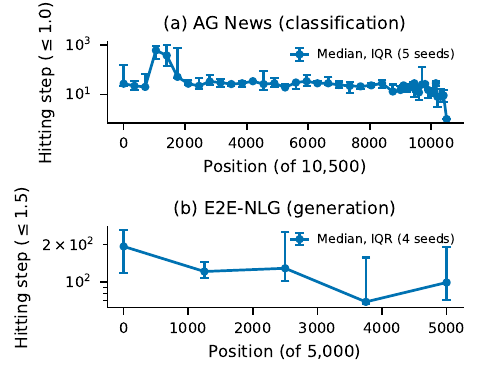}
\caption{\textbf{Target-block position changes recovery on Pythia-160M.}
(a) AG News classification \citep{zhang2015characterlevel};
(b) E2E-NLG generation \citep{novikova2017e2e}.
Recovery time trends downward as target data is introduced at a later position (see Supp~\ref{app:pythia-ordering-details}).
}
\label{fig:pythia-ordering}
\end{figure}

Filtering determines whether a sample enters training; ordering determines
how its effect survives later updates. The information measure used here
tracks training dependence on that sample. A certificate might instead require
that the released representation reveal nothing about the task labels.
We next test this stronger release condition.

\section{Capability Information Removal and Recovery}
\label{sec:formalizing-capability-removal}

Let $(X,Y)\sim P_{XY}$, with finite label space, and let
$Z_\theta=g_\theta(X)$ be the representation used by a downstream predictor.
The relevant requirement is now independence of $Z_\theta$ from the target
label $Y$. Following
information-bottleneck formulations
\citep{tishby2000information,saxe2018information} and their use in safety
interventions \citep{deeb2024unlearning,rosati2024repnoise}, we measure
retained target information by $I(Y;Z_\theta)$.

\begin{definition}[Complete capability information removal]
\label{def:complete-capability-removal}
A chosen representation achieves complete capability information removal
with respect to $Y$ if $I(Y;Z_\theta)=0$, equivalently $Y\perp Z_\theta$.
For log loss, this is equivalent to
\[
\inf_q-\mathbb E[\log q(Y\mid Z_\theta)]=H(Y),
\]
where the infimum is over all conditional decoders.
\end{definition}

The decoder characterization follows from
$I(Y;Z)=H(Y)-\inf_q\mathcal L_{\mathrm{CE}}(q)$
\citep{barber2004algorithm}. It is stronger than poor performance of a
particular head or probe: no decoder of the released representation can
beat the label-only baseline $H(Y)$. Poor performance by one predictor
may instead reflect information hiding if another decoder achieves
lower loss. Our constructions satisfy the stronger condition of complete
information removal.

Lemma~\ref{lem:decoupling} preserves $I(Y;Z)=0$ throughout any
representation-compatible orbit. By the certificate ceiling, an MI-only
bound at zero must therefore satisfy
\[
B(0)\leq\inf_{g\in G}\tau^\epsilon(T_g(\theta);\mathcal A).
\]
We now construct a zero-information release for which this bound is small.

\subsection{Exact removal with preserved tangent geometry}

Let $\phi(x)\in\mathbb R^p$ be fixed features, and consider the
linear-in-parameters representation $Z_{W,b}(x)=W\phi(x)+b\in\mathbb R^m$.
The features may come from a frozen pretrained backbone. Define the
ordinary removal objective
\[
\mathcal L_{\mathrm{remove}}(W,b)
=\frac12\mathbb E[\|Z_{W,b}(X)-u\|^2]
\]
for a constant independent target $u$.

\begin{proposition}[Perfect information removal with preserved Jacobian]
\label{prop:jacobian-preserving-removal}
For any pretrained $(W_{\mathrm{pre}},b_{\mathrm{pre}})$, the release
$W^\star=0$, $b^\star=u$ globally minimizes
$\mathcal L_{\mathrm{remove}}$ and satisfies $I(Y;Z_{W^\star,b^\star})=0$.
Using row-wise vectorization of $W$, its parameter Jacobian equals the
pretrained Jacobian at every input:
\[
D_{(W,b)}Z_{W,b}(x)
=\begin{bmatrix}I\otimes\phi(x)^\top&I\end{bmatrix}.
\]
Consequently, its tangent kernel on any fixed evaluation data is unchanged.
\end{proposition}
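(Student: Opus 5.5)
The proof splits into three checks, one for each claim: global minimality, zero mutual information, and the Jacobian identity.

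\emph{Minimality.} The objective $\mathcal L_{\mathrm{remove}}$ is an expectation of a squared norm, so it is nonnegative. At $W^\star=0$, $b^\star=u$ the representation is $Z_{W^\star,b^\star}(X)=0\cdot\phi(X)+u=u$ for every input. The integrand $\|Z-u\|^2$ is therefore identically zero, so the objective attains its lower bound $0$ and $(W^\star,b^\star)$ is a global minimizer. This holds for any pretrained starting point, since the argument never refers to $(W_{\mathrm{pre}},b_{\mathrm{pre}})$.

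\emph{Zero information.} The released representation is the constant $u$. Since $u$ is fixed (or, if random, drawn independently of $(X,Y)$), $Z_{W^\star,b^\star}$ is independent of $Y$. Hence $I(Y;Z_{W^\star,b^\star})=0$, which is the condition of Definition~\ref{def:complete-capability-removal}. Equivalently, every decoder of $Z$ attains log loss no better than $H(Y)$.

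\emph{Jacobian.} The map $(W,b)\mapsto W\phi(x)+b$ is affine in the parameters, so its derivative does not depend on $(W,b)$. Under row-wise vectorization, row $j$ of $W$, namely $w_j$, contributes $w_j^\top\phi(x)$ to coordinate $j$. The block of $D_W Z$ is therefore $e_j\phi(x)^\top$ placed in row $j$, which assembles to $I_m\otimes\phi(x)^\top$. The bias enters with derivative $I_m$. This gives
\[
D_{(W,b)}Z_{W,b}(x)=\begin{bmatrix}I\otimes\phi(x)^\top&I\end{bmatrix}
\]
at every parameter value, in particular at both the pretrained and the released parameters. Since the tangent kernel on evaluation points $x,x'$ is $D Z(x)\,D Z(x')^\top=(\phi(x)^\top\phi(x')+1)I$, it depends only on the features. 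It is therefore unchanged on any fixed evaluation data.

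No step is a real obstacle. The only point needing care is bookkeeping the vectorization convention so that the Kronecker factor order, $I\otimes\phi^\top$ rather than $\phi^\top\otimes I$, matches the row-wise stacking.
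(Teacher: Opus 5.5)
Your proof is correct and follows the paper's argument: the constant release $Z\equiv u$ gives zero removal loss (so nonnegativity yields global minimality) and independence from $Y$, and the affine dependence on $(W,b)$ makes the Jacobian, and hence $K=JJ^\top$, independent of the parameters. The explicit kernel $(\phi(x)^\top\phi(x')+1)I$ is a correct elaboration; one small wording fix is that $e_j\phi(x)^\top$ is the $j$-th column block of $D_WZ$ (the derivative with respect to $w_j$), whose only nonzero row is row $j$.
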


\begin{proof}
The release is constant, so its removal loss and label MI are zero.
The displayed Jacobian is independent of $W,b$, and hence so is its
Gram matrix $K=JJ^\top$.
\end{proof}

Thus this removal objective admits a solution with the entire pretrained
Jacobian intact. The next construction combines perfect capability removal
with recovery in one gradient step.

\begin{proposition}[Zero information with one-step recovery]
\label{prop:one-step}
Let $X$ be uniform on $\{-1,+1\}$, let $Y=X$, and let the target
representation and predictor be $Z_a(X)=aX$, released at $a_0=0$.
For any nonempty sample $\mathcal D_H$ from this distribution, squared
empirical loss is $\mathcal L_{\mathcal D_H}(a)=\frac12(a-1)^2$.
Then $I(Y;Z_{a_0})=0$, yet gradient descent with step size $1$ attains zero
loss in one update. For every $0<\epsilon<1/2$, its hitting time is one.

This release is also realizable in the complete-filtering experiment of
\S\ref{sec:training-data-filtering}, with
$I(\mathcal D_H;\Theta^1)=0$, while preserving a separate utility predictor.
Thus a universal MI-only lower bound over a class containing this example
cannot exceed one at zero information for any attack family containing
this gradient-descent protocol.
\end{proposition}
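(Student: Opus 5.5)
The first part of Proposition~\ref{prop:one-step} is a direct computation, which I will carry out first. Every sample from the distribution has $y_i=x_i\in\{-1,+1\}$, so each squared-loss term is $\frac12(a x_i-x_i)^2=\frac12 x_i^2(a-1)^2=\frac12(a-1)^2$. The empirical average over any nonempty sample is therefore $\mathcal L_{\mathcal D_H}(a)=\frac12(a-1)^2$, independent of the particular sample. At the release $a_0=0$ the representation $Z_{a_0}(X)=0$ is constant. A constant is independent of $Y$, so $I(Y;Z_{a_0})=0$, and Definition~\ref{def:complete-capability-removal} holds. For the gradient step, $\nabla\mathcal L(a)=a-1$, so one update with $\eta=1$ gives $a_1=0-(0-1)=1$ and hence zero loss. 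Because the initial loss is $\frac12>\epsilon$ for every $0<\epsilon<\frac12$, the initial-loss condition of Section~\ref{sec:tamper-resistance-def} holds. The attack is deterministic, so the expectation is trivial and $\tau^\epsilon=1$ for every sample draw.

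For realizability in the filtering experiment, I will embed the parameter $a$ into a two-block model $\theta=(a,b)$. Here $a$ drives the target predictor $aX$, and $b$ drives a separate utility predictor (for example $bX'$ on a safe task). I will choose a learning procedure $\mathcal A_{\Theta_0}$ whose initialization has $a=0$ and which trains only $b$ on $\mathcal D_S$, or more generally any rule that updates $a$ using only safe data whose gradient in $a$ vanishes. At $\rho=1$ the retained dataset is $\mathcal D_S$, which is independent of $\mathcal D_H$. Together with the independence assumptions on seed, initialization and training randomness, this gives $I(\mathcal D_H;\Theta^1)=0$ by the data-processing chain, exactly as in Corollary~\ref{cor:data-filtering}. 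The released $a$ is still $0$. If the attack trains only the $a$ block, or if the target loss does not depend on $b$, the one-step computation above applies verbatim. The utility predictor is unchanged by the attack because its gradient in $a$ is zero and vice versa. The main obstacle is this step: stating the product structure carefully so that the harmful loss depends only on $a$ and the utility loss only on $b$. That separation is what makes the full-parameter gradient step leave $b$ fixed and still give $a_1=1$.

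Finally, for the certificate consequence I will invoke Definition~\ref{def:certificate}. If $B$ is an MI-only lower bound valid over a class containing this release, and $\mathfrak A$ contains the unit-step gradient-descent protocol, then $B(0)\le\tau^\epsilon_{\mathfrak A}(\theta)\le\tau^\epsilon(\theta;\mathrm{GD})=1$. This holds whichever information value is used, $I(Y;Z)=0$ or $I(\mathcal D_H;\Theta^1)=0$, since both vanish. It also holds under each of the per-release, expectation and high-probability conventions, because the hitting time equals one for every draw.
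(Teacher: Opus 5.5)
Your proposal is correct and follows essentially the same route as the paper. Both arguments use the direct computation via $X^2=1$ and a unit gradient step, then the two-block realization with the target block fixed at $0$ and only the utility block trained on $\mathcal D_S$, so that $\Theta^1=(0,V(\mathcal D_S))$ is independent of $\mathcal D_H$, and finally the observation that a hitting time of one for every draw caps any uniform, expected, or high-probability MI-only bound (your explicit check that the initial loss $\tfrac12$ exceeds $\epsilon$, giving $\tau^\epsilon\ge 1$, is a small detail the paper leaves implicit).
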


\begin{proof}
At release $Z_{a_0}=0$, independently of $Y$. Since $X^2=1$, every
sample has loss $\frac12(a-1)^2$, and the gradient at zero is $-1$;
one unit step gives $a_1=1$. For the filtering realization, use separate
parameter blocks $(a,v)$ for the target and utility predictors. Initialize
$a=0$ and train $v$ only on independent safe data, so
$\Theta^1=(0,V(\mathcal D_S))$ is independent of $\mathcal D_H$.
The target update changes only $a$, preserving the utility predictor.
The hitting time is one for every draw, which bounds any uniform,
expected, or high-probability certificate based on either zero MI value.
\end{proof}

This construction supplies the quantitative counterexample that the orbit
ceiling alone does not. It also makes the role of prior structure explicit:
the input feature already supports the task, although the released
representation carries no label information. Information removal need not
remove this capacity for subsequent learning.

An ordinary removal objective can also admit solutions with different
recovery behavior within the same model. For $Z_{a,b}(X)=abX$, both
$(a,b)=(0,1)$ and $(0,0)$ globally minimize
$\frac12\mathbb E[Z_{a,b}(X)^2]$ and have zero label MI. On the target
$Y=X$, one unit gradient step from $(0,1)$ gives $(1,1)$ and zero loss,
whereas $(0,0)$ has zero Jacobian and remains stationary under gradient
descent. The information value is identical at both releases; the selected
minimizer determines whether this gradient attack recovers. Unlearning's
choice of solution therefore matters beyond achieving MI zero.

\subsection{When surviving tangent geometry supports fast recovery}

The one-step example has ideal conditioning. For a general differentiable model, recovery
depends on how the tangent kernel acts on the current loss gradient, not
simply on whether the Jacobian is nonzero. Let $z(\theta)$ stack the attack
outputs, $J_t=D_\theta z(\theta_t)$, $K_t=J_tJ_t^\top$, and
$g_t=\nabla_z\mathcal L(z(\theta_t))$. Write
$\mathcal L^\star=\inf_\theta\mathcal L(\theta)$.

\begin{proposition}[Kernel alignment bounds recovery time]
\label{prop:pl-convergence-kernel-alignment}
Suppose along full-batch gradient descent that
\[
\tfrac12\|g_t\|^2\geq\alpha(\mathcal L(\theta_t)-\mathcal L^\star),
\qquad g_t^\top K_tg_t\geq\lambda\|g_t\|^2
\]
for constants $\alpha,\lambda>0$. Then the parameter loss satisfies the
Polyak--\L{}ojasiewicz (PL) inequality with $\mu=\alpha\lambda$.
If it is also $L$-smooth and $0<\eta\leq1/L$, $0<\eta\mu<1$, then
\[
\mathcal L(\theta_t)-\mathcal L^\star
\leq(1-\eta\mu)^t\Delta_0,
\quad\Delta_0=\mathcal L(\theta_0)-\mathcal L^\star.
\]
For $0<\delta<\Delta_0$, at most
\[
T_{\mathrm{suff}}(\delta)=
\left\lceil\frac{\log(\Delta_0/\delta)}{-\log(1-\eta\mu)}\right\rceil
\]
steps suffice to reach excess loss at most $\delta$.
\end{proposition}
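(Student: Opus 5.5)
The plan is to derive the PL inequality in parameter space from the chain rule, and then plug it into the standard descent-lemma argument for $L$-smooth functions.

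\textbf{Step 1: parameter gradient.} Write $\mathcal L(\theta)=\mathcal L(z(\theta))$. The chain rule gives
\[
\nabla_\theta\mathcal L(\theta_t)=J_t^\top g_t .
\]
Hence
\[
\|\nabla_\theta\mathcal L(\theta_t)\|^2=g_t^\top J_tJ_t^\top g_t=g_t^\top K_tg_t\geq\lambda\|g_t\|^2 .
\]

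\textbf{Step 2: PL inequality.} Combining Step 1 with the output-space gradient-dominance hypothesis yields
\[
\tfrac12\|\nabla_\theta\mathcal L(\theta_t)\|^2\geq\lambda\cdot\tfrac12\|g_t\|^2\geq\alpha\lambda(\mathcal L(\theta_t)-\mathcal L^\star).
\]
This is the PL inequality with $\mu=\alpha\lambda$. It holds at every iterate along the trajectory, and that is all the convergence argument uses. There is no need to establish it globally; the statement's ``along full-batch gradient descent'' qualifier makes this legitimate.

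\textbf{Step 3: linear convergence.} By $L$-smoothness and $\eta\leq1/L$, the descent lemma gives
\[
\mathcal L(\theta_{t+1})\leq\mathcal L(\theta_t)-\eta\bigl(1-\tfrac{L\eta}{2}\bigr)\|\nabla\mathcal L(\theta_t)\|^2\leq\mathcal L(\theta_t)-\tfrac{\eta}{2}\|\nabla\mathcal L(\theta_t)\|^2 .
\]
Subtracting $\mathcal L^\star$ and applying the PL inequality from Step 2 gives
\[
\mathcal L(\theta_{t+1})-\mathcal L^\star\leq(1-\eta\mu)(\mathcal L(\theta_t)-\mathcal L^\star).
\]
Induction on $t$ then yields the stated geometric bound $(1-\eta\mu)^t\Delta_0$. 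The hypothesis $0<\eta\mu<1$ ensures the contraction factor lies in $(0,1)$, so its logarithm is negative and finite.

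\textbf{Step 4: hitting time.} The inequality $(1-\eta\mu)^t\Delta_0\leq\delta$ is equivalent to $t\log(1-\eta\mu)\leq\log(\delta/\Delta_0)$. Dividing by the negative quantity $\log(1-\eta\mu)$ flips the inequality, giving
\[
t\geq\frac{\log(\Delta_0/\delta)}{-\log(1-\eta\mu)} .
\]
Since $\delta<\Delta_0$, this threshold is positive, and its ceiling is the smallest integer meeting it, which is $T_{\mathrm{suff}}(\delta)$.

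\textbf{Main obstacle.} The only delicate point is consistency of notation. The hypotheses are stated at iterates, and $\nabla_z\mathcal L$ is with respect to the stacked outputs. I would make explicit that the parameter loss is the composition of the output loss with $z(\theta)$, so that the chain-rule identity in Step 1 is exact. The remaining steps are standard.
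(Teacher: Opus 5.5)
Your proposal is correct and follows the paper's proof. In both, the chain-rule identity $\|\nabla_\theta\mathcal L\|^2=g_t^\top K_tg_t$ combined with the two hypotheses gives the PL inequality with $\mu=\alpha\lambda$ along the trajectory, and the descent-lemma contraction followed by solving $(1-\eta\mu)^t\Delta_0\le\delta$ gives the rate and $T_{\mathrm{suff}}(\delta)$; you simply spell out the descent-lemma step and the sign flip from dividing by $\log(1-\eta\mu)$, which the paper leaves implicit.
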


\begin{proof}
The chain rule gives
$\|\nabla_\theta\mathcal L\|^2=g_t^\top K_tg_t
\geq2\alpha\lambda(\mathcal L-\mathcal L^\star)$.
Smooth descent then yields the geometric rate; solving for $t$ gives the
iteration bound.
\end{proof}

For squared loss $\frac12\|z-y\|^2$ with a realizable target,
$\alpha=1$ and $\mathcal L^\star=0$. In the linear construction,
$J$ and $K$ are constant throughout training. If $K\succeq\lambda I$,
the rate therefore applies despite exact zero information at release.
For the absolute-loss threshold in Definition~\ref{def:certificate}, use
$\delta=\epsilon-\mathcal L^\star>0$. This is an upper bound on attack
time, rather than a lower-bound certificate of resistance.

\paragraph{Controlled zero-information releases.}
\Cref{fig:pythia-capability-removal} implements the linear construction with
a factorized head on a frozen Pythia-160M backbone. The representation is
$z=A(sB_{\mathrm{source}}h)$, released with $A=0$; recovery trains only $A$.
All scales have $I(Y;Z)=0$, while $J_A^{(s)}=sJ_A^{(1)}$ and
$K_A^{(s)}=s^2K_A^{(1)}$. For $s>0$, the change of variables
$\widetilde A=sA$ makes SGD at rate $\eta$ exactly equivalent to the
$s=1$ attack at effective rate $\eta s^2$. Conversely, choosing
$\eta/s^2$ restores the reference trajectory. The experiment illustrates fast recovery from exact independence and
parameterization-dependent attack speed under cross-entropy loss
(Supp~\ref{app:pythia-capability-removal-details}).

\begin{figure}[t]
\centering
\includegraphics[width=\linewidth]{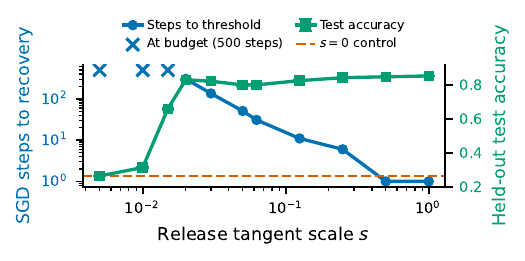}
\caption{\textbf{Recovery varies with tangent scale at identical
zero-information releases.} A factorized head on a frozen Pythia-160M
backbone has $I(Y;Z)=0$ at every scale $s$. Under fixed-rate SGD,
recovery is faster and held-out accuracy generally higher at larger
scales. For $s>0$, the effective learning rate is $\eta s^2$;
non-crossings are right-censored at the 500-step budget.}
\label{fig:pythia-capability-removal}
\end{figure}

\paragraph{Recovery after a weight-only scrub.}
The BERT-Tiny experiment in \Cref{tab:bert-retain-mse} tests the orbit
mechanism after an unlearning intervention. Scrubbing reduces target
accuracy and decoder-estimated predictive $\mathcal V$-information
\citep{xu2020usable} while preserving retain
accuracy. In a paired seed-0 comparison, an exact Q/K transformation of
the scrubbed checkpoint reduces recovery from 280 to 25 steps, with the
same attack batches and learning-rate grid. The transformed checkpoint
has the same released function and representations, so the recovery
difference cannot be attributed to a change in their information content.
The three-seed scrub mean and the single-seed orbit comparison are
reported separately (Supp~\ref{app:bert-retain-mse}).

\begin{table}[t]
\centering
\footnotesize
\resizebox{\linewidth}{!}{%
\begin{tabular}{lccc}
\toprule
Arm & Target acc. & $\widehat I_{\mathcal V}$ (L2) & Recovery \\
\midrule
Target-adapted & 91.7\% & $0.457$ & $0$ \\
No target adaptation & $\approx50\%$ & -- & $75\pm5$ \\
MSE scrub + retain & $53.4\pm3.6\%$ & $0.115$ & $252\pm67$ \\
\quad Seed 0, $\alpha=1$ & -- & -- & $280$ \\
\quad Seed 0, $\alpha=32$ & (unchanged) & (unchanged) & $25$ \\
Random initialization & 50.0\% & $0.140$ & $>1000$ \\
\bottomrule
\end{tabular}%
}
\caption{\textbf{Function-preserving reparameterization accelerates recovery
from a scrubbed BERT checkpoint.} The paired seed-0 comparison is
$280\to25$ steps. Scrub and no-target-adaptation summaries are means
$\pm$ standard deviations over three seeds. Information entries are final-encoder-layer logistic-decoder estimates
of predictive $\mathcal V$-information \citep{xu2020usable} in nats; all three layers are reported in the supplement.
The no-target-adaptation control has lower retain accuracy than the
scrubbed model; full results are in Supp~\ref{app:bert-retain-mse}.}
\label{tab:bert-retain-mse}
\end{table}

\section{Discussion}

The two information notions we study fail to constrain recovery for related but
distinct reasons. Weight--data MI tracks dependence on a sampled dataset,
so excluding that draw need not remove transferable task structure.
Label--representation MI concerns what a fixed representation reveals,
so even exact independence leaves its parameter derivatives unconstrained.
Training order additionally changes the starting loss without necessarily
changing the information encoded in the weights. These distinctions
explain why information at release does not determine learning dynamics.

The result is a limit on standalone certification, not on the usefulness
of unlearning or filtering. Either intervention can change the initial
loss, reachable representations, or update geometry in ways that slow
an attack. A guarantee requires assumptions connecting those changes to
attack dynamics; MI zero alone allows both the fast-recovery and
degenerate-kernel constructions above.

The orbit ceiling gives a complementary evaluation requirement. When an
attacker can cheaply change parameterization, a robust lower bound must
hold at the fastest reachable representative. Reporting the released
coordinates, trainable blocks, optimizer, and tuning budget therefore
matters for interpreting recovery comparisons. The BERT comparison and
the exact learning-rate equivalence in the factorized-head experiment
illustrate two ways that this requirement can affect an evaluation.

Geometry offers a direct language for the missing assumptions, but a
recovery bound and a resistance certificate require different arguments.
Our PL analysis lower-bounds progress to upper-bound attack time. A
resistance certificate would instead need to bound attainable progress
throughout the allowed attack trajectories. Tangent-kernel scale and
alignment may contribute to such bounds, together with step-size limits,
control of nonlinear remainders, and the evolution of $K_{\theta_t}$.
A kernel statistic at release alone does not supply these controls when
features change during training. A broader information-theoretic quantity
might incorporate tangent geometry. Fisher information captures local
parameter sensitivity: it defines a metric on infinitesimal changes to a
parameterized distribution, rather than statistical dependence between
two random variables. Whether such local geometry can support a global
notion of information content remains open.

Further questions concern these controls under reparameterization-invariant
attacks such as natural gradient, during feature learning, or in the
presence of global constraints such as barriers or mode connectivity
\citep{rosati2026limits,siddiqui2026dormant,cheng2025understanding}.
These training-time results leave inference-time information bounds intact.

\section{Conclusion}

Mutual information at release alone cannot universally certify slow
fine-tuning recovery. Exact exclusion of an independently sampled dataset
and exact label--representation independence both admit a one-step
recovery example, while function-preserving symmetries and training order
show how recovery can vary at fixed information. Information removal can
still be useful, but a tamper-resistance guarantee must also constrain
the dynamics of the attack it is intended to resist.

\subsection*{AI use statement}

The initial draft was written without generative AI tools. Generative AI
tools assisted with experiment design and execution, revision for clarity
and cohesion, and checking of mathematical
constructions. The authors are responsible for verifying the final text,
mathematical claims, and experimental artifacts.

\bibliographystyle{plainnat}
\bibliography{bibliography}

%%%%%%%%%%%%%%%%%%%%%%%%%%%%%%%%%%%%%%%%%%%%%%%%%%%%%%%%%%%%
\section*{Checklist}

\begin{enumerate}

  \item For all models and algorithms presented, check if you include:
  \begin{enumerate}
    \item A clear description of the mathematical setting, assumptions,
    algorithm, and/or model. [Yes -- Sections~\ref{sec:tamper-resistance-def}--\ref{sec:formalizing-capability-removal} state the channel/orbit setup and every construction explicitly.]
    \item An analysis of the properties and complexity (time, space, sample
    size) of any algorithm. [Yes -- hitting-time bounds are given in Prop.~\ref{prop:ordering-decoupling} and Prop.~\ref{prop:pl-convergence-kernel-alignment}.]
    \item (Optional) Anonymized source code, with specification of all
    dependencies, including external libraries. [Code to be released upon publication.]
  \end{enumerate}

  \item For any theoretical claim, check if you include:
  \begin{enumerate}
    \item Statements of the full set of assumptions of all theoretical
    results. [Yes.]
    \item Complete proofs of all theoretical results. [Yes -- every lemma, proposition, and corollary has a complete proof in the main text.]
    \item Clear explanations of any assumptions. [Yes.]
  \end{enumerate}

  \item For all figures and tables that present empirical results, check if
  you include:
  \begin{enumerate}
    \item The code, data, and instructions needed to reproduce the main
    experimental results (either in the supplemental material or as a
    URL). [Code and data to be released upon publication.]
    \item All the training details (e.g., data splits, hyperparameters, how
    they were chosen). [Yes -- main-text experimental protocols are in Supp~\ref{app:pythia-ordering-details}--\ref{app:bert-retain-mse}; the supplementary orbit protocols are in Supp~\ref{app:additional-empirical-validation}.]
    \item A clear definition of the specific measure or statistics and
    error bars (e.g., with respect to the random seed after running
    experiments multiple times). [Yes -- main-text studies specify seed counts and summaries; supplementary orbit plots report single-run protocols.]
    \item A description of the computing infrastructure used (e.g., type of
    GPUs, internal cluster, or cloud provider). [Partial -- the execution backends are described in Supp~\ref{app:compute-infrastructure}; exact Pythia GPU models were not recorded here.]
  \end{enumerate}

  \item If you are using existing assets (e.g., code, data, models) or
  curating/releasing new assets, check if you include:
  \begin{enumerate}
    \item Citations of the creator if your work uses existing assets. [Yes -- pretrained models and datasets are cited at first use (Pythia, BERT, AG News, E2E-NLG, Fashion-MNIST).]
    \item The license information of the assets, if applicable. [No -- asset citations are provided, but individual licenses are not enumerated.]
    \item New assets either in the supplemental material or as a URL, if
    applicable. [Not Applicable -- no new datasets or model checkpoints are released.]
    \item Information about consent from data providers/curators. [Not Applicable -- only public benchmark datasets are used.]
    \item Discussion of sensible content if applicable, e.g., personally
    identifiable information or offensive content. [Not Applicable -- the reported experiments use public classification, generation, and synthetic association tasks; no new personal data were collected.]
  \end{enumerate}

  \item If you used crowdsourcing or conducted research with human
  subjects, check if you include:
  \begin{enumerate}
    \item The full text of instructions given to participants and
    screenshots. [Not Applicable.]
    \item Descriptions of potential participant risks, with links to
    Institutional Review Board (IRB) approvals if applicable. [Not Applicable.]
    \item The estimated hourly wage paid to participants and the total
    amount spent on participant compensation. [Not Applicable.]
  \end{enumerate}

\end{enumerate}

\clearpage
\appendix
\thispagestyle{empty}

% Supplementary material: To improve readability, you must use a single-column format for the supplementary material.
\onecolumn
\section{Additional Empirical Validation}
\label{app:additional-empirical-validation}

The two toy-scale orbit demonstrations referenced in the main text are
shown here as supplementary illustrations.

\begin{figure}[ht]
    \centering
    \includegraphics[width=0.55\linewidth]{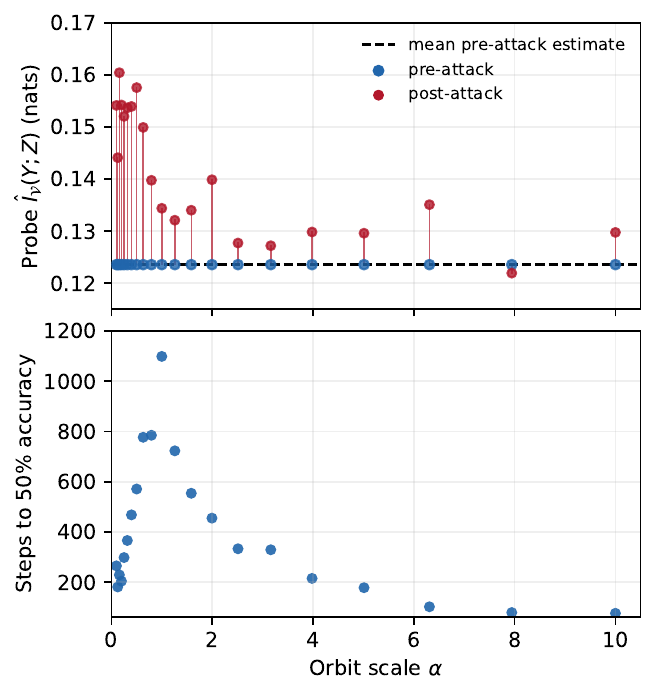}
    \caption{\textbf{Recovery varies along a function-preserving orbit.}
    A Fashion-MNIST MLP \citep{xiao2017fashionmnist} is rescaled by
    $\alpha$. The upper panel shows held-out logistic-probe
    $\widehat I_{\mathcal V}(Y;Z)$ \citep{xu2020usable} before and after the attack;
    the dashed line is the mean pre-attack estimate. Exact invariance
    of the released function and representation follows from the symmetry,
    rather than from these probe estimates. The lower panel reports
    recovery to 50\% target-class accuracy under momentum SGD.}
    \label{fig:neuron_orbit_mi_trace}
\end{figure}

\begin{figure}[ht]
    \centering
    \includegraphics[width=0.55\linewidth]{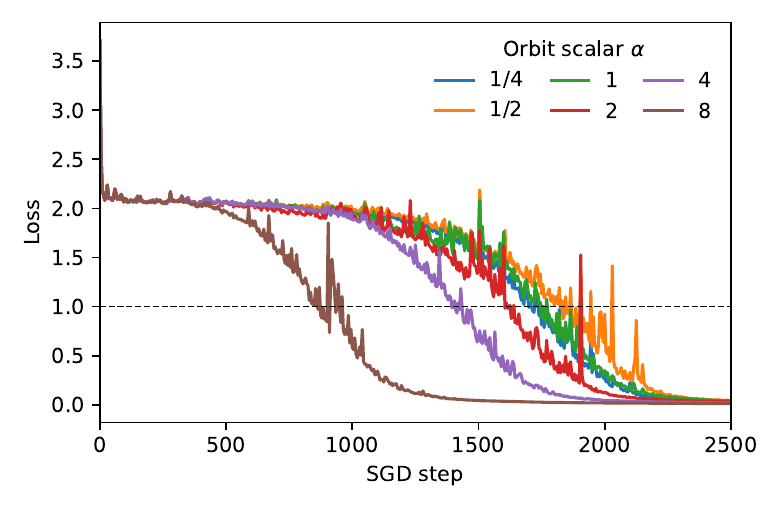}
    \caption{Function-preserving reparameterization changes relearning time. Pretrained Pythia-14M \citep{biderman2023pythia} models are transformed along the exact query–key
  attention orbit $((W_Q,b_Q)\mapsto\alpha(W_Q,b_Q))$, $((W_K,b_K)\mapsto\alpha^{-1}(W_K,b_K))$, leaving all attention scores—and therefore
  $I(X;Y_\theta)$—constant by construction. Starting from identical outputs, each model relearns the same 128 random item–token associations
  using vanilla SGD with an identical minibatch sequence. The trajectories cross the common (L=1) threshold at substantially different steps,
  demonstrating that optimization time can vary along an information-invariant parameter orbit.}
    \label{fig:pythia-constant-on-orbit}
\end{figure}

\paragraph{Fashion-MNIST protocol.}
A two-hidden-layer ReLU MLP of width 64 is trained for 500 updates on the
non-target classes among the first 10{,}000 training images (target class
6 excluded; SGD rate $0.03$, momentum $0.9$, batch size 32, sampling seed
2). A single source model is rescaled at 21 logarithmically spaced values
$\alpha\in[0.1,10]$, multiplying the first hidden layer's weights and
bias by $\alpha$ and dividing the next layer's weights by $\alpha$.
The maximum measured logit deviation across the sweep is
$1.82\times10^{-5}$. Recovery uses all ten classes, momentum SGD at rate
$0.001$, momentum $0.9$, and the same minibatch stream (seed 77) for
2{,}000 updates. The endpoint is 50\% accuracy on 32 held-out class-6
images. Predictive $\mathcal V$-information probe estimates
\citep{xu2020usable} use penultimate representations of the first
2{,}000 test images, binary class-6 labels, a 70/30 probe fit/evaluation
split (seed 0), and PCA to at most 32 dimensions. Logistic regularization
is selected on an inner validation split. Each orbit point is one run;
there are no between-run error bars.

\paragraph{Pythia-14M protocol.}
A single seed (17) generates 128 item-to-token associations over eight
answer tokens. All parameters are fine-tuned in FP32 with vanilla SGD
(rate $10^{-5}$, batch size 8), using identical minibatch orders after each
Q/K rescaling. Loss is evaluated every five updates on the same 128
associations, so the experiment measures memorization rather than
held-out generalization. \Cref{fig:pythia-constant-on-orbit} displays six
scales $\{1/4,1/2,1,2,4,8\}$ and a common loss threshold of 1; their
hitting times range from 865 to 1{,}825 steps. The 5{,}000-step runs have
zero measured initial logit deviation under the tested transforms.

\section{Experimental Details}
\label{app:compute-infrastructure}

The following protocols support the four main-text empirical results:
the Deep Ignorance gauge-transform witness, the Pythia ordering sweep, the exact-zero factorized-head construction,
and the BERT scrub/orbit comparison. They test distinct implications of
the theory: order-dependent recovery, recovery from an information-free
representation under a specified attack, and recovery variation at a fixed
released function. Exact weight--data MI equality across training orders
and one-step recovery under complete filtering are established by the
mathematical constructions.

\paragraph{Compute infrastructure.}
The Fashion-MNIST MLP and BERT-Tiny experiments were run locally on a
MacBook (Apple MPS/CPU). The Pythia-160M experiments used a single
GPU; the Deep Ignorance 6.9B experiments used a single NVIDIA H100 GPU
(80GB). No experiment reported here required multi-GPU or multi-node training.

\subsection{Training data filtration experiments}
\label{app:qk-orbit-accuracy-details}

For every attention head, replacing $(W_Q,W_K)$ by $(\alpha W_Q,\,W_K/\alpha)$
leaves the attention logits $W_Q^\top W_K$, and therefore the entire
function the network computes, exactly unchanged: this is a gauge
transformation of the checkpoint, not a different model.
\Cref{fig:qk-orbit-accuracy} reports WMDP-bio
\citep{li2024wmdp} results from two Deep Ignorance checkpoints
\citep{obrien2025deepignorance}: SF at $\alpha=8$ has 8.7 percentage
points higher final held-out accuracy than SF at $\alpha=1$, while UF
at $\alpha=1/24$ has 17.3 points lower accuracy than UF at $\alpha=1$.

\paragraph{Models and gauge transform.}
The two checkpoints are \url{EleutherAI/deep-ignorance-unfiltered} (UF) and
\url{EleutherAI/deep-ignorance-e2e-strong-filter} (SF). For a chosen $\alpha>0$, every
attention head's query and key projection rows are rescaled as $W_Q\mapsto\alpha W_Q$,
$W_K\mapsto W_K/\alpha$ (biases likewise), applied once to all layers before the attack
begins. Since attention depends on $W_Q$ and $W_K$ only through $W_Q^\top W_K$, this is
exact for any $\alpha$: the maximum absolute difference between pre- and post-transform
logits on a held-out batch is at machine precision in every run.

\paragraph{Task and data split.}
The task is four-way multiple-choice WMDP-bio question answering, scored by cross-entropy
over the four answer-letter logits. From the fixed pool of 1,273 items, a permutation seeded
once (independent of the run seed) sets aside 100 held-out test items; the remaining 1,173
are the training pool.

\paragraph{Attack.}
The attack is full-model fine-tuning: all model parameters, including the
query and key projections, are trained by plain SGD (learning
rate $2.5\times10^{-4}$) for one epoch over the training pool (293 steps, batch size 4,
$293\times4=1{,}172$ examples seen). Three seeds (17, 18, 19) vary only the training
minibatch order; the test split, the gauge transform, and the trainable parameter set are
identical across seeds. Train and held-out test accuracy are logged every 25 steps.

\paragraph{Conditions and reported quantities.}
Four (arm, $\alpha$) conditions are run: UF at $\alpha=1$ and $\alpha=1/24$, SF at
$\alpha=1$ and $\alpha=8$. The figure plots the mean and $\pm1$ standard deviation across
the 3 seeds of each condition's training and held-out accuracy against examples seen.
Final (step-293) held-out accuracy: UF $\alpha=1$, $60.0\pm2.2\%$; UF $\alpha=1/24$,
$42.7\pm5.4\%$; SF $\alpha=1$, $45.3\pm4.1\%$; SF $\alpha=8$, $54.0\pm5.0\%$.
Final training accuracy in the same condition order is
$68.9\pm1.8\%$, $61.5\pm7.6\%$, $58.5\pm5.0\%$, and
$69.3\pm4.4\%$ (mean $\pm$ population standard deviation across seeds).
For a descriptive comparison of recovery speed, the first logged
training accuracy of at least $60\%$ occurs at steps
$\{125,100,125\}$ for UF at $\alpha=1$,
$\{150,175,150\}$ for SF at $\alpha=1$,
$\{125,125,150\}$ for SF at $\alpha=8$, and
$\{225,225,250\}$ for UF at $\alpha=1/24$, giving medians of 125,
150, 125, and 225 updates, respectively. Accuracy is evaluated
every 25 updates and need not remain above the threshold thereafter.

\subsection{Experimental details: Pythia ordering, two target tasks}
\label{app:pythia-ordering-details}

\Cref{fig:pythia-ordering} varies the position of a fixed target-training
block within a utility stream. Recovery time trends downward as the
target block is introduced later: AG News shows a sharp late-stream drop,
while E2E-NLG shows a smaller dip at position 3{,}750. These experiments hold
exposure fixed; they do not measure weight--data MI or establish its
invariance across orders.

Both experiments share the base model (\texttt{EleutherAI/pythia-160m},
bf16) and construction: replay $p$ utility (C4 \citep{raffel2020exploring}, 256-token blocks) updates,
train a target block, replay the remaining utility updates to a fixed
total, then attack the released model with fresh SGD/AdamW on held-out
target examples to a fixed validation-loss threshold (hitting time =
first step at or below threshold; a seed that never crosses within budget
is censored, not dropped).

\paragraph{AG News (panel a).} Target is 4-class AG News; utility is
10{,}500 SGD (clipped) updates; target block is 2{,}000 steps;
recovery is a frozen-body, linear-head-only attack for 2{,}000 steps at
lr $10^{-2}$, threshold loss $\le1.0$. 31 positions (step 350) plus a
16-position zoomed follow-up (step 100) over the last 1{,}500 steps, 5
seeds each.

\paragraph{E2E-NLG (panel b).} Target is E2E NLG realization-token
causal-LM loss; utility totals 5{,}500 AdamW updates; target block is
16{,}000 steps; recovery is full-parameter AdamW for 1{,}500 steps at lr
$3\times10^{-5}$, threshold loss $\le1.5$. 5 positions
$\{0,1250,2500,3750,5000\}$, 4 seeds, all passing a pre-registered
acquisition-invariance gate (post-target loss spread $\le0.05$ nats across
positions within a seed -- at 8{,}000 target steps only 2 of 4 seeds
passed, so target steps were doubled to meet the same acquisition criterion for all four seeds).

\subsection{Capability removal experimental details}
\label{app:pythia-capability-removal-details}

\paragraph{Model and source acquisition.}
A frozen \texttt{EleutherAI/pythia-160m} backbone supplies its final-token
hidden state $h$. The four-class AG News head consists of
$v=Bh$, $z=Av+b$, and a linear classifier $C$, with
$B\in\mathbb R^{64\times d}$, $A\in\mathbb R^{64\times64}$,
and $b\in\mathbb R^{64}$. A source head
$(B_{\mathrm{source}},A_{\mathrm{source}},b_{\mathrm{source}},C)$ is trained
once on the training split with AdamW (learning rate $10^{-3}$, weight
decay $10^{-4}$, batch size 64, 1{,}000 steps, $A$ initialized to the
identity). The backbone is never updated. All recovery conditions share
this source head.

\paragraph{Release and recovery.}
For each scale
\[
s\in\{1,0.5,0.25,0.125,0.0625,0.05,0.03,0.02,0.015,0.01,
0.005,0.001,10^{-5},0\},
\]
set $B_s=sB_{\mathrm{source}}$, $A=0$, $b=0$, and freeze $B_s,b,C$.
The release has $z=0$ for every input, hence $I(Y;Z)=0$ exactly.
Released logits are bit-identical across all fourteen scales.
Recovery trains only $A$ with vanilla SGD (learning rate $0.1$, batch
size 64, 500 steps). The rate is chosen at $s=1$ and held fixed across
scales. Five seeds (0--4) vary the minibatch order, with each seed's
stream replayed across all scales.

\paragraph{Exact scaling law.}
Since the trainable matrix enters through $A(sB_{\mathrm{source}}h)$,
\[
J_A^{(s)}=sJ_A^{(1)},\qquad
K_A^{(s)}=s^2K_A^{(1)},\qquad
q_A(s):=\|\nabla_A\mathcal L\|^2=s^2q_A(1)
\]
at release. The gradient-energy ratio agrees with $s^2$ to five significant
figures over the tested positive scales, down to $s=10^{-5}$ and ratio
$10^{-10}$ (\Cref{tab:pythia-capability-removal-qa}).
For $s>0$, setting $\widetilde A=sA$ gives
$\mathcal L_s(A)=\mathcal L_1(\widetilde A)$, so an update at rate $\eta$
induces
\[
\widetilde A_{t+1}=\widetilde A_t-
\eta s^2\nabla_{\widetilde A}\mathcal L_1(\widetilde A_t).
\]
This is an exact minibatch-by-minibatch equivalence in real arithmetic.
Choosing $\eta_s=0.1/s^2$ restores the $s=1$ reference trajectory;
small positive scales therefore create no intrinsic recovery barrier.
At $s=0$, the trainable block's Jacobian vanishes and SGD cannot update
$A$. This is a zero-gradient control for this attack, not a claim of
resistance to unrestricted weight modifications.

\paragraph{Data and endpoints.}
Shuffle AG News's official training file once with seed 0 and select
4{,}096 training and 4{,}096 validation examples. The official test file
supplies 2{,}048 held-out examples. Test accuracy is evaluated only at the
end of each recovery run. The hitting criterion is validation accuracy
at least $0.7$, chosen using the $s=1$ pilot trajectory and fixed across
all scales and seeds. Validation is evaluated after the first update and
then every five updates. Runs that do not cross within 500 steps are
right-censored ($\tau>500$); plotted budget markers are not observed hits.

\Cref{fig:pythia-capability-removal} reports median observed hitting times
and mean held-out accuracy with one standard deviation across seeds.
All seeds are censored at the scales marked as budget-limited, as detailed
in \Cref{tab:pythia-capability-removal-qa}. The experiment tests exact
release independence and scale-dependent recovery, rather than the
trajectory-wide PL assumptions for cross-entropy loss.

\begin{table}[ht]
\centering
\caption{Multi-seed recovery summary (5 seeds; hitting threshold
val.\ accuracy \(\ge0.7\); 500-step budget). \(q_A(1)=6.358\)
(mean at step 0, seed-invariant since the released state does not depend on
seed); the predicted ratio column is \(s^2\). For \(s>0\), attacking at
nominal learning rate \(0.1\) is exactly equivalent to attacking the
\(s=1\) parameterization at effective learning rate \(0.1s^2\).}
\label{tab:pythia-capability-removal-qa}
\begin{tabular}{@{}rrrrrr@{}}
\toprule
$s$ & hit (med/min/max) & censored & test acc.\ (mean, sd) & $q_A(s)/q_A(1)$ & $s^2$ \\
\midrule
1        & 1 / 1 / 6       & 0/5 & 0.8530 (0.0017) & 1.000      & 1.000 \\
0.5      & 1 / 1 / 1       & 0/5 & 0.8481 (0.0012) & 0.2500     & 0.2500 \\
0.25     & 6 / 6 / 6       & 0/5 & 0.8427 (0.0008) & 0.06250    & 0.06250 \\
0.125    & 11 / 11 / 11    & 0/5 & 0.8254 (0.0018) & 0.01563    & 0.01563 \\
0.0625   & 31 / 31 / 36    & 0/5 & 0.8006 (0.0026) & 0.003906   & 0.003906 \\
0.05     & 51 / 51 / 51    & 0/5 & 0.8001 (0.0036) & 0.002500   & 0.002500 \\
0.03     & 136 / 131 / 141 & 0/5 & 0.8219 (0.0039) & 0.0009000  & 0.0009000 \\
0.02     & 306 / 301 / 311 & 0/5 & 0.8309 (0.0013) & 0.0004000  & 0.0004000 \\
0.015    & $>500$       & 5/5 & 0.6603 (0.0062) & 0.0002250  & 0.0002250 \\
0.01     & $>500$       & 5/5 & 0.3144 (0.0013) & 0.0001000  & 0.0001000 \\
0.005    & $>500$       & 5/5 & 0.2646 (0.0000) & 0.0000250  & 0.0000250 \\
0.001    & $>500$       & 5/5 & 0.2646 (0.0000) & $1.00\times10^{-6}$ & $1.00\times10^{-6}$ \\
$10^{-5}$ & $>500$      & 5/5 & 0.2646 (0.0000) & $1.00\times10^{-10}$ & $1.00\times10^{-10}$ \\
0 (control) & $>500$    & 5/5 & 0.2646 (0.0000) & 0 & 0 \\
\bottomrule
\end{tabular}
\end{table}

\subsection{Replication details for the BERT unlearning experiment}
\label{app:bert-retain-mse}

The scrub trains target-class hidden representations toward independently
sampled retain-feature vectors while optimizing retain-class cross-entropy.
Target accuracy falls to $53.4\pm3.6\%$, with retain accuracy
$96.3\pm1.2\%$. Layerwise logistic-decoder estimates decrease from
$0.433/0.449/0.457$ to $0.178/0.150/0.115$ nats; these are estimates of
predictive $\mathcal V$-information \citep{xu2020usable} for the specified decoder class, not Shannon MI.

The scrubbed model recovers in $252\pm67$ updates across three seeds,
compared with $>1{,}000$ for random initialization. A pretrained control
with no target adaptation or scrub reaches the threshold in $75\pm5$
updates. This establishes that generic pretraining transfer can support
rapid acquisition, but does not apportion the scrub's retained structure
between generic and target-specific contributions. The control is not
utility-matched: its retain accuracy is $34.1\pm15.3\%$, versus
$96.3\pm1.2\%$ for the scrubbed model. ``No target adaptation'' refers to
the controlled AG News stage, not a guarantee about the original BERT
pretraining corpus.

A paired seed-0 orbit sweep isolates parameterization dependence within
the scrubbed arm. At $\alpha=1$ the checkpoint recovers in 280 steps; at
$\alpha=32$ it recovers in 25, using the same attack batches and
learning-rate grid. The exact Q/K symmetry preserves the released
function and the observed hidden representations; the logit invariance
check gives zero maximum deviation on the evaluation batch. This
single-seed comparison is separate from the three-seed arm means and
does not establish complete MI removal or full-kernel preservation.

We use Google BERT-Tiny \citep{devlin2019bert}, revision \texttt{30b0a37ccaaa32f332884b96992754e246e48c5f}, with two transformer blocks, hidden size 128, and maximum sequence length 64. The classifier has a 64-dimensional linear representation layer and a four-class linear head. We adapt the pretrained model for 400 AdamW updates at learning rate $2\times10^{-4}$ on AG News. The target classes are World (class 0) and Business (class 2); retain classes are Sports (class 1) and Sci/Tech (class 3). Each class contributes 1,000 training examples and 500 examples to each probe, validation, and test split. All splits and sampling indices use seed 0.

Let $h_\ell(x)$ denote the masked-mean representation at the embedding output or after encoder block $\ell$, and let $b_\ell$ be a bank of 512 retain-set representations from the same layer. Each update samples 32 target examples and 32 bank indices. The objective is
\[
\mathcal L(\theta)=\frac{1}{3}\sum_{\ell=0}^{2}\left\|h_\ell(x_i)-b_{\ell,j_i}\right\|_2^2+\mathcal L_{\rm CE}(f_\theta(x_R),y_R),
\]
where the retain minibatch contains 32 randomly sampled non-target examples. Both terms have coefficient one. All parameters are optimized with AdamW at learning rate $10^{-3}$, zero weight decay, for 1,000 updates.

For each layer, a fresh logistic decoder is fitted on the probe split, selected by validation cross-entropy, and evaluated on the test split. We estimate predictive $\mathcal V$-information \citep{xu2020usable} as $\widehat I_{\mathcal V}(Y;Z)=\log 2-\mathrm{CE}_{\rm test}$ in nats. The matched random encoder baseline has layerwise values $0.1403$, $0.1392$, and $0.1397$ nats.

The full target-margin NTK uses 32 balanced target test examples. For example $i$, let $J_i=\partial(f_0(x_i)-f_2(x_i))/\partial\theta$ and set $K=JJ^\top$. We report $\operatorname{tr}(K)$ and relative drift $\|K-K_{\rm pre}\|_F/\|K_{\rm pre}\|_F$. Recovery starts from each checkpoint with all parameters trainable and uses a balanced target minibatch sequence shared across learning rates and, in the paired orbit comparison, across scales. We report the smallest hitting time over the fixed SGD learning-rate grid $\{3\times10^{-4},10^{-3},3\times10^{-3}\}$; both paired orbit endpoints attain their best time at $3\times10^{-3}$. The recovery step is the first update with target accuracy at least 75\% and target cross-entropy at most 0.5; runs are stopped at 1,000 updates.

\begin{table}[ht]
\centering
\small
\resizebox{\textwidth}{!}{%
\begin{tabular}{lcccccc}
\toprule
Arm & Target acc. & Retain acc. & $\widehat I_{\mathcal V}$ (emb./L1/L2) & $\operatorname{tr}(K)$ & NTK drift & Recovery \\
\midrule
Target-adapted & 91.7\% & 82.4\% & $0.433/0.449/0.457$ & $9.59\times10^5$ & $0$ & $0$ \\
No target adaptation & $\approx50\%$ & $34.1\pm15.3\%$ & -- & -- & -- & $75\pm5$ \\
MSE scrub + retain & $53.4\pm3.6\%$ & $96.3\pm1.2\%$ & $0.178/0.150/0.115$ & $5.74\times10^4$ & $0.952$ & $252\pm67$ \\
\quad Seed 0, $\alpha=1$ & -- & -- & -- & $4.70\times10^4$ & -- & $280$ \\
\quad Seed 0, $\alpha=32$ & (unchanged) & (unchanged) & (unchanged) & $1.01\times10^5$ & -- & $25$ \\
Random initialization & 50.0\% & 21.7\% & $0.140/0.139/0.140$ & $1.11\times10^4$ & $-$ & $>1000$ \\
\bottomrule
\end{tabular}%
}
\caption{Full results for \Cref{tab:bert-retain-mse}, including retain
accuracy, tangent-kernel trace, and NTK drift. Recovery is ordinary
full-parameter SGD, best of the learning-rate grid
$\{3\times10^{-4},10^{-3},3\times10^{-3}\}$ (three seeds for the
no-target-adaptation and MSE-scrub arms; seed 0 for the paired orbit rows);
random initialization does not reach the recovery criterion within
1,000 updates.}
\label{tab:bert-retain-mse-full}
\end{table}

\paragraph{No-target-adaptation control.} Same pretrained encoder and
the same fresh rep/head initialization as the other arms (captured once,
before any adaptation, and shared across arms), but neither the 400-step
target adaptation nor the scrub is applied. Recovery uses the identical
procedure, minibatch-stream seeding, and threshold as the main runs, over
3 seeds.

\paragraph{Paired Q/K orbit comparison.}
BERT uses separate query and key projections. Applying
$Q\mapsto\alpha Q$, $K\mapsto\alpha^{-1}K$ (including biases) at every
layer preserves attention scores and the represented function. The
seed-0 sweep uses the saved scrub checkpoint, with maximum measured logit
deviation zero at $\alpha=32$. Both endpoints use the same batches,
learning-rate grid, and threshold. The measured target-margin kernel
trace changes from $4.70\times10^4$ at $\alpha=1$ to $1.01\times10^5$ at
$\alpha=32$. The paired hitting times are 280 and 25 steps, respectively.
This demonstrates attack-speed variation within the orbit; it does not
identify all causes of the scrub's slowdown relative to other arms.

\end{document}